\title{Graph Cuts with Arbitrary Size Constraints \\ Through Optimal Transport}
\author{\name Chakib Fettal \email chakib.fettal@etu.u-paris.fr \\
      \addr CDC Informatique\\
      Centre Borelli, UMR 9010, Université Paris Cité
      \AND
      \name Lazhar Labiod \email lazhar.labiod@u-paris.fr \\
      \addr Centre Borelli, UMR 9010, Université Paris Cité
      \AND
      \name Mohamed Nadif \email mohamed.nadif@u-paris.fr \\
      \addr Centre Borelli, UMR 9010, Université Paris Cité
      }
\DeclareMathOperator{\Tr}{Tr}
\DeclareMathOperator*{\argmax}{arg\,max}
\DeclareMathOperator*{\argmin}{arg\,min}
\definecolor{linkgreen}{HTML}{01a049}
\definecolor{linkblue}{HTML}{0000EE}
\definecolor{linkpink}{HTML}{BF0040}
\newcommand{\A}{\mathbf{A}}
\newcommand{\Z}{\mathbf{Z}}
\newcommand{\X}{\mathbf{X}}
\newcommand{\W}{\mathbf{W}}
\newcommand{\M}{\mathbf{M}}
\newcommand{\D}{\mathbf{D}}
\newcommand{\V}{\mathbf{V}}
\newcommand{\G}{\mathbf{G}}
\newcommand{\mL}{\mathbf{L}}
\newcommand{\mH}{\mathbf{H}}
\newcommand{\Y}{\mathbf{Y}}
\newcommand{\I}{\mathbf{I}}
\newcommand{\vone}{\boldsymbol{\mathbbm{1}}}
\newcommand{\vw}{\mathbf{w}}
\newcommand{\vx}{\mathbf{x}}
\newcommand{\vv}{\mathbf{v}}
\newcommand{\vp}{\mathbf{p}}
\newcommand{\vm}{\mathbf{m}}
\newcommand{\setR}{\mathbb{R}}
\newcommand{\simp}{\Delta}
\newtheorem{proposition}{Proposition}
\newcommand{\fastest}[1]{\cellcolor{cyan!15} #1}
\newcommand{\slowest}[1]{\cellcolor{red!15} #1}
\begin{document}

\maketitle

\begin{abstract}
A common way of partitioning graphs is through minimum cuts. One drawback of classical minimum cut methods is that they tend to produce small groups, which is why more balanced variants such as normalized and ratio cuts have seen more success. However, we believe that with these variants, the balance constraints can be too restrictive for some applications like for clustering of imbalanced datasets, while not being restrictive enough for when searching for perfectly balanced partitions. Here, we propose a new graph cut algorithm for partitioning graphs under arbitrary size constraints. We formulate the graph cut problem as a Gromov-Wasserstein with a concave regularizer problem. We then propose to solve it using an accelerated proximal GD algorithm which guarantees global convergence to a critical point, results in sparse solutions and only incurs an additional ratio of $\mathcal{O}(\log(n))$ compared to the classical spectral clustering algorithm but was seen to be more efficient.
\end{abstract}

\section{Introduction}
Clustering is an important task in the field of unsupervised machine learning. For example, in the context of computer vision, image clustering consists in grouping images into clusters such that the images within the same clusters are similar to each other, while those in different clusters are dissimilar. Applications are diverse and wide ranging, including, for example, content-based image retrieval \citep{bhunia2020sketch}, image annotation \citep{cheng2018survey,cai2013new}, and image indexing \citep{cao2013towards}. A popular way of clustering an image dataset is through creating a graph from input images and partitioning it using techniques such as spectral clustering which solves the minimum cut (\texttt{min-cut}) problem. This is notably the case in subspace clustering where a self-representation matrix is learned according to the subspaces in which images lie and a graph is built from this matrix \citep{lu2012robust,elhamifar2013sparse,cai2022efficient,ji2017deep,zhou2018deep}.  

However, in practice, algorithms associated with the \texttt{min-cut} problem suffer from the formation of some small groups which leads to bad performance. As a result, other versions of \texttt{min-cut} were proposed that take into account the size of the resulting groups, in order to make resulting partitions more balanced. This notion of size is variable, for example, in the Normalized Cut (\texttt{ncut}) \citep{shi2000normalized}, size refers to the total volume of a cluster, while in the Ratio Cut (\texttt{rcut}) problem \citep{hagen1992new}, it refers to the cardinality of a cluster. A common method for solving the \texttt{ncut} and \texttt{rcut} problems is the spectral clustering approach \citep{von2007tutorial,ng2001spectral} which is popular due to often showing good empirical performance and being somewhat efficient.

However, there are some weaknesses that apply to the spectral clustering algorithms and to most approaches tackling the \texttt{ncut} and \texttt{rcut} problems. A first one is that, for some applications, the cluster balancing is not strict enough, meaning that even if we include the size regularization into the \texttt{min-cut} problem, the groups are still not necessarily of similar size, which is why several truly balanced clustering algorithm have been proposed in the literature \citep{chen2017self,li2018balanced,chen2019labin}. Another problem is that the balance constraint is too restrictive for many real world datasets, for example, a recent trend in computer vision is to propose approaches dealing with long-tailed datasets which are datasets that contain head classes that represent most of the overall dataset and then have tail classes that represent a small fraction of the overall dataset \citep{xu2022constructing, zhu2014capturing}. Some approaches propose integrating generic size constraints to the objective like in \cite{genevay2019differentiable,hoppner2008clustering,zhu2010data}, however these approaches directly deal with euclidean data instead of graphs.

In this paper, we propose a novel framework that can incorporate generic size constraints in a strict manner into \texttt{min-cut} problem using Optimal Transport. We sum up our contributions in this work as follows:
\begin{itemize}
    \item We introduce a GW problem with a concave regularizer and frame it as a graph cuts problem with an arbitrarily defined notion of size instead of specifically the volume or cardinality as is traditionally done in spectral clustering.
    \item We then propose a new way to solve this constrained graph cut problem using a nonconvex accelerated proximal gradient scheme which guarantees global convergence to a critical point for specific step sizes.
    \item Comprehensive experiments on real-life graphs and graphs built from image datasets using subspace clustering are performed. Results showcase the effectiveness of the proposed method in terms of obtaining the desired cluster sizes, clustering performance and computational efficiency. For reproducibility purposes, we release our code\footnote{\url{https://github.com/chakib401/OT-cut}}.
\end{itemize}

The rest of this paper is organized as follows: Preliminaries are presented in Section 2. Some related work is discussed in section 3. The \textrm{\texttt{OT-cut}} problem and algorithm along with their analysis and links to prior research are given in section 4. We present experimental results and analysis in section 5. Conclusions are then given in section 6.

\section{Related Work}
Our work is related with balanced clustering, as the latter is a special case of it, as well as with the more generic problem of constrained clustering, and GW based graph partitioning.
\subsection{Balanced Clustering}
A common class of constrained clustering problems is balanced clustering where we wish to obtain a partition with clusters of the same size. For example, \cite{desieno1988adding} introduced a conscience mechanism which penalizes clusters relative to their size, \cite{ahalt1990competitive}, then employed it to develop the Frequency Sensitive Competitive Learning (FSCL) algorithm. In \cite{li2018balanced}, authors proposed to leverage the exclusive lasso on the $k$-means and \texttt{min-cut} problems to regulate the balance degree of the clustering results. In \cite{chen2017self}, authors proposed a self-balanced \texttt{min-cut} algorithm for image clustering implicitly using exclusive lasso as a balance regularizer in order to produce balanced partitions. \cite{lin2019balanced} proposed a simplex algorithm to solve a minimum cost flow problem similar to $k$-means. \cite{pei2020efficient} proposes a clustering algorithm based on a unified framework of $k$-means and ratio-cut and balanced partitions. The time and space complexity of our method are both linear with respect to the number .\cite{wu2021balanced} explores a balanced graph-based clustering model, named exponential-cut, via redesigning the intercluster compactness based on an exponential transformation. \cite{liu2022graphbased} proposes to introduce a novel balanced constraint to regularize the clustering results and constrain the size of clusters in spectral clustering. \cite{wang2023discrete} introduces a discrete and balanced spectral clustering with scalability model that integrates the learning the continuous relaxation matrix and the discrete cluster indicator matrix into a single step. \cite{nie2020unsupervised} proposes to use balanced clustering algorithms to learn embeddings

\subsection{Constrained Clustering}

Some clustering approaches with generic size constraints, which can be seen as an extension of balanced clustering, also exist. In \cite{zhu2010data}, a heuristic algorithm to transform size constrained clustering problems into integer linear programming problems was proposed. Authors in \cite{ganganath2014data} introduced a modified k-means algorithm which can be used to obtain clusters of preferred sizes. Clustering paradigms based on OT generally offer the possibility to set a target distribution for resulting partitions. In \cite{nie2024parameter}, a parameter-insensitive min cut clustering with flexible size constraints is proposed. \cite{genevay2019differentiable} proposed a deep clustering algorithm through optimal transport with entropic regularization. In \cite{laclau2017co,titouan2020co, fettal2022efficient}, authors proposed to tackle co-clustering and biclustering problems using OT demonstrating good empirical performance.

\subsection{Gomov-Wasserstein Graph Clustering}
The Gromov-Wasserstein (GW) partitioning paradigm S-GWL \citep{xu2019scalable} supposes that the Gromov-Wasserstein discrepancy can uncover the clustering structure of the observed source graph $\mathcal{G}$ when the target graph $\mathcal{G}_{dc}$ only contains weighted self-connected isolated nodes, this means that the adjacency matrix of $\mathcal{G}_{dc}$ is diagonal. The weights of this diagonal matrix as well as the source and target distribution are special functions of the node degrees. Their approach uses a regularized proximal gradient method as well as a recursive partitioning scheme and can be used in a multi-view clustering setting. The problem with this approach is its sensitivity to the hyperparameter setting which is problematic since it is an unsupervised method. \cite{abrishami2020geometry} proposes an OT metric with a component based view of partitioning by assigning cost proportional to transport distance over graph edges.
Another approach, SpecGWL \citep{chowdhury2021generalized} generalizes spectral clustering using Gromov-Wasserstein discrepancy and heat kernels but suffers from high computational complexity. Given a graph with $n$ node, its optimization procedure involves the computation of a gradient which is in $O(n^3\log(n))$ and an eigendecompostion $O(n^3)$ and therefore is not usable for large scale graphs.
\cite{liu2022graph} leverages the OT probability to seek the edges of the graph that characterizes the local nonlinear structure of the original feature. A recent approach \citep{yan2024optimal} uses a spectral optimal transport barycenter model, which learns spectral embeddings by solving a barycenter problem equipped with an optimal transport discrepancy and guidance of data. 

\section{Preliminaries}
% Here we present the necessary background needed for introducing our proposal.
% \paragraph{Notations}
In what follows, $\simp^n=\{\vp\in\setR^{n}_+ | \sum_{i=1}^n p_i=1\}$ denotes the $n$-dimensional standard simplex. $\Pi(\vw,\vv)=\{\Z\in\setR^{n\times k}_+|\Z\vone=\vw, \Z^\top\vone=\vv\}$ denotes the transportation polytope, where $\vw\in \simp^n$ and $\vv\in \simp^k$ are the marginals of the joint distribution $\Z$ and $\vone$ is a vector of ones, its size can be inferred from the context.  Matrices are denoted with uppercase boldface letters, and vectors with lowercase boldface letters. For a matrix $\M$, its $i$-th row is $\vm_i$ and $m_{ij}$ is the $j$-th entry of row $i$. $\Tr$ refers to the trace of a square matrix. $||.||$ refers to the Frobenius norm. 

\subsection{Graph Cuts and Spectral Clustering}
\paragraph{Minimum-cut Problem.}  Given an undirected graph $\mathcal{G}=(\mathcal{V}, \mathcal{E})$ with a weighted adjacency matrix $\W \in \setR^{n\times n}$ with $n=|\mathcal{V}|$, a cut is a partition of its vertices $\mathcal{V}$ into two disjoint subsets $\mathcal{A}$ and $\bar{\mathcal{A}}$. The value of a cut is given by
\begin{equation}
    \textrm{cut}(\mathcal{A}) = \sum_{v_i\in \mathcal{A},v_j\in \bar{\mathcal{A}}} w_{ij}.
\end{equation}
The goal of the minimum $k$-cut problem is to find a partition $(\mathcal{A}_1,...,\mathcal{A}_k)$ of the set of vertices $\mathcal{V}$ into $k$ different groups that is minimal in some metric. Intuitively, we wish for the edges between different subsets to have small weights, and for the edges within a subset have large weights. Formally, it is defined as 
\begin{equation}
    \texttt{min-cut}(\W, k) = \min_{\mathcal{A}_1,\ldots,\mathcal{A}_k} \sum_{i=1}^k \textrm{cut}(\mathcal{A}_i).
\end{equation}
This problem can also be stated as a trace minimization problem by representing the resulting partition $\mathcal{A}_1,\ldots,\mathcal{A}_k$ using an assignment matrix $\X$ such that for each row $i$, we have that
\begin{equation}
    x_{ij}=\begin{cases}
    1 & \text{if vertex $i$ is in $\mathcal{A}_j$},\\
    0 & \text{otherwise}.
\end{cases}
\end{equation} 
This condition is equivalent to introducing two constraints which are $\X \in \{0,1\}^{n\times k}$ and $\X\vone=\vone$. The minimum $k$-cut problem can then be formulated as
\begin{equation}
    \textrm{\texttt{min-cut}}(\W, k)=\min_{\substack{\X \in \{0,1\}^{n\times k}\\ \X\vone=\vone}} \quad \Tr(\X^\top \mL\X),
\end{equation}
where $\mL=\D-\W$ refers to the graph Laplacian of the graph $\mathcal{G}$ and $\D$ is the diagonal matrix of degree of $\W$, i.e., $d_{ii}=\sum_j w_{ij}$.

\paragraph{Normalized $k$-Cut Problem.} In practice, solutions to the minimum $k$-cut problem do not yield satisfactory partitions due to the formation of small groups of vertices. Consequently, versions of the problem that take into account some notion of "size" for these groups have been proposed. The most commonly used one is normalized cut \citep{shi2000normalized}:
\begin{equation}
    \label{eq:ncut1}
    \texttt{ncut}(\W, k)=\min_{\mathcal{A}_1,\ldots,\mathcal{A}_k} \sum_{i=1}^k \frac{\textrm{cut}(\mathcal{A}_i)}{\textrm{vol}(\mathcal{A}_i)},
\end{equation}
where the volume can be conveniently written as $\textrm{vol}(\mathcal{A}_i)=\vx_i^T\D\vx_i$. Another variant which is referred to as the ratio cut problem due to the different groups being normalized by their cardinality instead of their volumes:
\begin{equation}
    \texttt{rcut}(\W, k)=\min_{\mathcal{A}_1,\ldots,\mathcal{A}_k} \sum_{i=1}^k \frac{\textrm{cut}(\mathcal{A}_i)}{\vert \mathcal{A}_i\vert},
\end{equation}
where $\vert \mathcal{A}_i\vert=\vx_i^T\vx_i$. This variant can be recovered from the normalized graph cut problem by replacing $\D$ with $\I$ in the computation of the volume.
\paragraph{Spectral Clustering.} A common approach to solving the normalized graph cut problems, spectral clustering, relaxes the partition constraints on $\X$ and instead considers a form of semi-orthogonality constraints. In the case of $\texttt{rcut}$, we have \texttt{rcut} written as a trace optimization problem:
\begin{equation}
    \texttt{rcut}(\W, k)=\min_{\substack{\X \in \setR^{n\times k} \\ \X^\top \X = \I}} \quad \Tr\left(\X^\top \mL\X\right).
\end{equation}
On the other hand for $\texttt{ncut}$, the partition matrix $\X$ is substituted with $\mH=\D^{1/2}\X$ and a semi-orthogonality constraint is placed on this $\mH$, i.e.,
\begin{equation}
    \texttt{ncut}(\W, k)=\min_{\substack{\mH \in \setR^{n\times k} \\ \mH^\top \mH = \I}} \quad \Tr\left(\mH^\top \D^{-1/2}\mL\D^{-1/2}\mH\right).
\end{equation}
A solution $\mH$ for the \texttt{ncut} problem is formed by stacking the first $k$-eigenvectors of the symmetrically normalized Laplacian $\mL_s=\D^{-1/2}\mL\D^{-1/2}$ as its columns, and then applying a clustering algorithm such as $k$-means on its rows and assign the original data points accordingly \citep{ng2001spectral}. The principle is the same for solving \texttt{rcut} but instead using the unnormalized Laplacian.

\subsection{Optimal Transport}

\paragraph{Discrete optimal transport.}
The goal of the optimal transport problem is to find a minimal cost transport plan $\X$ between a source probability distribution of $\vw$ and a target probability distribution $\vv$. Here we are interested in the discrete Kantorovich formulation of OT \citep{kantorovich1942translocation}. When dealing with discrete probability distributions, said formulation is
\begin{equation}
\texttt{OT}(\M, \vw, \vv)=\min_{\X \in \Pi(\vw,\vv)}
\;
\left<\M, \X\right>,
\label{eq:OT}
\end{equation}
where  $\left<.,.\right>$ is the Frobenius product, $\M\in \setR^{n\times k}$ is the cost matrix, and $m_{ij}$ quantifies the effort needed to transport a probability mass from $\vw_i$ to $\vv_j$. Regularization can be introduced to further speed up computation of OT. Examples include entropic regularization \citep{cuturi2013sinkhorn,altschuler2017near} and low-rank regularization \citep{scetbon2022lowrank}, as well as, other types of approximations \citep{quanrud2018approximating,jambulapati2019direct}.

\paragraph{Discrete Gromov-Wasserstein Discrepancy.}
The discrete Gromov-Wasserstein (GW) discrepancy \citep{peyre2016gromov} is an extension of optimal transport to the case where the source and target distributions are defined on different metric spaces:
\begin{equation}
    \text{GW}(\M, \bar{\M}, \vw, \vv)=\min_{\X \in \Pi(\vw,\vv)} = 
\;
\left< L(\M, \bar{\M})\otimes \X, \X\right>
= \; 
\sum_{i,j,k,l}
L(m_{ik}, \bar{m}_{jl})x_{ij}x_{kl}
\end{equation}
where $\M\in \setR^{n\times n}$ and $\bar{\M}\in \setR^{k\times k}$ are similarity matrices defined on the source space and target space respectively, and $L: \setR \times \setR \to \setR$ is a divergence measure between scalars, $L(\M, \bar{\M}
)$ symbolizes the $n \times n \times k \times k $ tensor of all pairwise divergences between the elements of $\M$ and $\bar{\M}$. $\otimes$ denotes tensor-matrix product. Different approximation schemes have been explored for this problem \cite{altschuler2018approximating}.

\section{Proposed Methodology}

In this section, we derive our OT-based constrained graph cut problem and propose a nonconvex proximal GD algorithm which guarantees global convergence to a critical point.

\subsection{Normalized Cuts via Optimal Transport} 
As already mentioned, the good performance of the normalized cut algorithm comes from the normalization by the volume of each group in the cut. However, the size constraint is not a hard one, meaning that obtained groups are not of exactly the same volume. This leads us to propose to replace the volume normalization by a strict balancing constraint as follows:
\begin{equation}
\begin{aligned}
    &\min_{\mathcal{A}_1,\ldots,\mathcal{A}_k} \sum_{i=1}^k \textrm{cut}(\mathcal{A}_i) 
    \ \text{ s.t. }  \ \textrm{vol}(\mathcal{A}_1)=\ldots=\textrm{vol}(\mathcal{A}_k).
\end{aligned}
\label{eq:min-ncut-myv}
\end{equation}
this problem can be rewritten as the following trace minimization problem:
% \label{eq:min-ncut-myv}
\begin{equation}
\begin{aligned}
&\qquad\qquad\qquad\qquad \min_{\X} \quad \Tr(\X^\top \mL\X)\\
&\ \text{ subject to:}\\
&\begin{cases}    
\begin{aligned}
&\ \X \in \setR_+ ^ {n \times k}\\
&\ \X\vone=\D\vone, &\text{($\vx_i$ sums to the degree of node $i$)}\\
&\ \X^\top \vone=\frac{\sum_{i} d_{ii}}{k}\vone, &\text{(clusters are balanced w.r.t degrees)}\\
&\ \forall_i \Vert \vx_i \Vert_0=1 &\text{(a node belongs to a unique cluster.)}
\end{aligned}
\end{cases}
\end{aligned}
\end{equation}
Here, $\Vert.\Vert_0$ is the zero norm that returns the number of nonzero elements in its argument. This problem may not have feasible solutions. However, by dropping the fourth constraint, this problem becomes an instance of the Gromov-Wasserstein problem with an $\ell_2$ loss which is always feasible. Specifically, the first, second and third constraints are equivalent to defining $\X$ to be an element of the transportation polytope with a uniform target distribution and a source distribution consisting of the degrees of the nodes. These degrees can be represented as proportions instead of absolute quantities by dividing them over their sum, yielding the following problem:
\begin{equation}
\begin{aligned}
    &\min_{\X\in \Pi\left(\frac{1}{\sum_i d_{ii}}\D\vone, \frac{1}{k}\vone\right)} \quad \Tr(\X^\top \mL\X) \quad
    \label{eq:trace-min-ncut-myv}
\end{aligned}
\end{equation}
This formulation is a special case of the Gromov-Wasserstein problem for a source space whose similarity matrix in the initial space is $\M=-\mL$ and whose similarity matrix in the destination space is $\bar{\M}=\I$. Note that a ratio cut version can be obtained by replacing the volume constraint with
\begin{equation}
    |\mathcal{A}_1|=\ldots=|\mathcal{A}_k|
\end{equation}
in problem \ref{eq:min-ncut-myv}, and similarly in problem \ref{eq:trace-min-ncut-myv}, by substituting the identity matrix $\I$ for the degree matrix $\D$, giving rise to:
\begin{equation}
\begin{aligned}
    &\min_{\X\in \Pi\left(\frac{1}{n}\vone, \frac{1}{k}\vone\right)} \quad \Tr(\X^\top \mL\X) \quad 
\end{aligned}
\end{equation}
\subsection{Graph Cuts with Arbitrary Size Constraints}
From the previous problem, it is easy to see that target distribution does not need to be uniform, and as such, any distribution can be considered, leading to further applications like imbalanced dataset clustering. Another observation is that any notion of size can be considered and not only the volume or cardinality of the formed node groups. We formulate an initial version of the generic optimal transport graph cut problem as:
\begin{equation}
\begin{aligned}
    \min_{\X\in \Pi\left(\boldsymbol{\pi}^s, \boldsymbol{\pi}^t\right)} \quad \Tr(\X^\top \mL\X)\ \equiv\ \min_{\X\in \Pi\left(\boldsymbol{\pi}^s, \boldsymbol{\pi}^t\right)} \quad \left<\mL\X, \X\right>,
\end{aligned}
\label{eq:min-cut-dense}
\end{equation}
where $\boldsymbol{\pi}^s_i$ is the relative 'size' of the element $i$ and $\boldsymbol{\pi}^t_j$ is the desired relative 'size' of the group $j$. Through the form that uses the Frobenius product, it is easy to see how our problem is related to the Gromov-Wasserstein problem. 

\subsection{Regularization for Sparse Solutions}
We wish to obtain sparse solutions in order to easily interpret them as partition matrices of the input graph. We do so by aiming to find solutions over the extreme points of the transportation polytope which are matrices that have at most $n+k-1$ non-zero entries \citep{peyre2019computational}. We do so by introducing a regularization term to problem \ref{eq:min-cut-dense}. Consequently, we consider the following problem which we coin \texttt{OT-cut}:
\begin{equation}
\texttt{OT-cut}(\X, \pi_s, \pi_t)\equiv\min_{\X\in \Pi\left(\boldsymbol{\pi}^s, \boldsymbol{\pi}^t\right)} \  \Tr(\X^\top \mL\X)\ -\ \lambda\Vert \X \Vert^2
\label{eq:ultimate}
\end{equation}
where $\lambda\in \setR^+$ is the regularization trade-off parameter. It should be noted that our regularizer is concave.
We also define two special cases of this problem, which are based on the \texttt{ncut} and \texttt{rcut} problems. The first one which we call \texttt{OT-ncut} is obtained by fixing the hyper-parameter $\pi_s=\frac{1}{\sum_i d_{ii}}\D\vone$ while the second one \texttt{OT-rcut} is obtained by substituting the $\D$ in the previous formula with $\I$ and forcing the target to be uniform. Figure \ref{fig:loss} shows the evolution of the objective on different datasets.

\subsection{Optimization, Convergence and Complexity}
We wish to solve problem \ref{eq:ultimate} which is nonconvex, but algorithms with convergence guarantees exist for problems of this form. Specifically, we will be using a nonconvex proximal gradient descent based on \cite{li2015accelerated}. The pseudocode is given in algorithm \ref{alg:cuts}.
\begin{proposition}
For step size $\alpha=\frac{1}{2\lambda}$, the iterates $\X^{(t)}$ generated by the nonconvex PGD algorithm for our problem are all extreme points of the transporation polytope, and as such, have at most $n+k-1$ nonzero entries.
\end{proposition}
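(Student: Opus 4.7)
The plan is to recognize that the right splitting of the composite objective is to keep only the smooth convex quadratic $f(\X)=\Tr(\X^\top\mL\X)$ in the gradient step of the PGD update, and to push the concave regularizer $-\lambda\|\X\|^2$ together with the indicator of the transportation polytope into the nonsmooth term $g$. With this splitting, the one-step update reads
\[
\X^{(t+1)}\in\argmin_{\X\in\Pi(\boldsymbol{\pi}^s,\boldsymbol{\pi}^t)}\;\Bigl\{-\lambda\|\X\|^2+\tfrac{1}{2\alpha}\,\|\X-(\X^{(t)}-2\alpha\mL\X^{(t)})\|^2\Bigr\},
\]
using $\nabla f(\X^{(t)})=2\mL\X^{(t)}$, so the whole argument reduces to tracking contributions that are quadratic in $\X$.

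Next I would expand the squared Frobenius norm and collect $\|\X\|^2$ terms: the proximal penalty contributes $\tfrac{1}{2\alpha}\|\X\|^2$ while the regularizer contributes $-\lambda\|\X\|^2$, giving a net coefficient of $\tfrac{1}{2\alpha}-\lambda$. The step size $\alpha=\tfrac{1}{2\lambda}$ is chosen precisely so that this coefficient vanishes. All remaining terms are affine in $\X$, so the prox subproblem collapses to the linear program
\[
\X^{(t+1)}\in\argmax_{\X\in\Pi(\boldsymbol{\pi}^s,\boldsymbol{\pi}^t)}\;\langle\X,\,\X^{(t)}-2\alpha\mL\X^{(t)}\rangle.
\]

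From here the conclusion is a combination of two classical facts. First, a linear objective over a non-empty bounded polyhedron attains its optimum at a vertex, so the update can always be realized at an extreme point of $\Pi(\boldsymbol{\pi}^s,\boldsymbol{\pi}^t)$. Second, by the standard characterization of the extreme points of a transportation polytope (cited in the excerpt via \cite{peyre2019computational}), every vertex of $\Pi(\boldsymbol{\pi}^s,\boldsymbol{\pi}^t)$ has at most $n+k-1$ nonzero entries. Combining the two observations gives the sparsity bound at every iterate $t\geq 1$, regardless of whether the PGD variant being used is monotone or includes Li-Lin-style extrapolation, since the prox subproblem has the same form in either case.

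The main obstacle is conceptual rather than computational: one has to resist the temptation of linearizing the entire smooth surrogate $\Tr(\X^\top\mL\X)-\lambda\|\X\|^2$, which would turn the prox into a Euclidean projection onto $\Pi(\boldsymbol{\pi}^s,\boldsymbol{\pi}^t)$ and produce interior iterates; the concavity must instead be kept inside $g$ so that the chosen $\alpha$ can wipe out the $\|\X\|^2$ coefficient. A minor caveat worth flagging is that the reduced LP may have a full face of optima, so the proposition should be read modulo the standing assumption that the LP subroutine returns a vertex solution, which is automatic for the simplex-type or network-flow solvers that are the natural choice for transportation problems.
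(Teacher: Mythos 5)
Your proposal is correct and follows essentially the same route as the paper: split off $f(\X)=\Tr(\X^\top\mL\X)$ as the smooth part, keep the indicator of $\Pi(\boldsymbol{\pi}^s,\boldsymbol{\pi}^t)$ and the concave $-\lambda\Vert\X\Vert^2$ inside the prox, observe that $\alpha=\frac{1}{2\lambda}$ cancels the quadratic coefficient, and reduce each iteration to a linear OT problem whose solution is a vertex of the transportation polytope. Your explicit remark that the LP may have a non-vertex optimal face and that a simplex-type solver is needed to guarantee a vertex is a small point of care that the paper leaves implicit.
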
 
\begin{proof}
Problem \ref{eq:ultimate} can be equivalently stated by writing the constraint as a term in the loss function:
\begin{equation}
    \min_{\X} \quad \underbrace{\Tr(\X^\top \mL\X)}_{f(\X)} +  \underbrace{I_{\Pi\left(\boldsymbol{\pi}^s, \boldsymbol{\pi}^t\right)}(\X)-\lambda\Vert \X \Vert^2}_{g(\X)}
    \label{eq:decomposition}
\end{equation} 
where $I_\mathcal{C}$ is the characteristic function of set $\mathcal{C}$ i.e.
$$
I_\mathcal{C}=\begin{cases}
\ 0, &\text{if } \X \in \mathcal{C},\\
\ +\infty, &\text{if } \X \not\in \mathcal{C}.
\end{cases}
$$
Since we use a proximal descent scheme, we show how to compute the proximal operator for our loss function:
\begin{equation*}
\begin{aligned}
&\texttt{prox}_{\alpha g}\left(\X^{(t)}-\alpha\nabla f(\X^{(t)})\right)
=\texttt{prox}_{\alpha g}\left(\X^{(t)}-\alpha\nabla \Tr\left(\X^{(t)}\mL\X^{(t)}\right)\right)\\
&=\texttt{prox}_{\alpha(I_{\Pi\left(\boldsymbol{\pi}^s, \boldsymbol{\pi}^t\right)}-\alpha\Vert . \Vert^2)}\left((\I-2\alpha\mL)\X^{(t)}\right)
= \argmin_{\Z\in \Pi\left(\boldsymbol{\pi}^s, \boldsymbol{\pi}^t\right)} 
\frac{1}{2\alpha} \left\Vert\Z-(\I-2\alpha\mL)\X^{(t)}\right\Vert^2 - \lambda\left\Vert\Z\right\Vert^2\\
&= \argmin_{\Z\in \Pi\left(\boldsymbol{\pi}^s, \boldsymbol{\pi}^t\right)} 
\frac{1}{2\alpha} \left\Vert\Z\right\Vert^2+
\frac{1}{2\alpha}\left\Vert(\I-2\alpha\mL)\X^{(t)}\right\Vert^2 - \frac{1}{\alpha} \Tr \left(\Z^\top(\I-2\alpha\mL)\X^{(t)} \right) - \lambda\left\Vert\Z\right\Vert^2.
\end{aligned}
\end{equation*}
We assumed that $\alpha=\frac{1}{2\lambda}$, by substituting for $\lambda$ into the previous formula and dropping the constant term, we obtain:
\begin{equation*}
\begin{aligned}
&\argmin_{\Z\in \Pi\left(\boldsymbol{\pi}^s, \boldsymbol{\pi}^t\right)} 
\Tr \left(\Z^\top(2\alpha\mL-\I)\X^{(t)} \right)
= \argmin_{\Z\in \Pi\left(\boldsymbol{\pi}^s, \boldsymbol{\pi}^t\right)} 
\left<\Z,\ (2\alpha\mL-\I)\X^{(t)} \right>.
\end{aligned}
\end{equation*}
This is the classical OT problem. Its resolution is possible by stating it as the earth-mover's distance (EMD) linear program \citep{hitchcock1941distribution} and using the network simplex algorithm \citep{bonneel2011displacement}.
\end{proof}

\begin{proposition}
Algorithm \ref{alg:cuts} globally converges for step size $\alpha < \frac{1}{s}$ where $s$ is the smoothness constant of $\text{Tr}(\X^\top \mL \X)$.
\end{proposition}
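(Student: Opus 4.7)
The plan is to derive the convergence from the general nonconvex accelerated proximal gradient framework of \cite{li2015accelerated} by verifying that our objective fits their assumptions and then invoking their main convergence theorem. First, I would reuse the splitting already exhibited in equation \ref{eq:decomposition}, namely
\begin{equation*}
F(\X) = f(\X) + g(\X), \qquad f(\X) = \Tr(\X^\top \mL \X), \qquad g(\X) = I_{\Pi(\boldsymbol{\pi}^s,\boldsymbol{\pi}^t)}(\X) - \lambda \Vert \X \Vert^2.
\end{equation*}
The reason to keep this specific partition is that Li and Lin's scheme requires the smooth part to have a Lipschitz gradient and the nonsmooth part to be proper lower semicontinuous, which is exactly the regularity pattern we have here.

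Next I would discharge the regularity conditions one by one. The gradient $\nabla f(\X) = 2\mL\X$ is linear, hence Lipschitz with constant $s = 2\Vert \mL \Vert_{\mathrm{op}}$; this is the $s$ appearing in the statement. The function $g$ is proper since the transportation polytope $\Pi(\boldsymbol{\pi}^s,\boldsymbol{\pi}^t)$ is nonempty (it contains the product measure $\boldsymbol{\pi}^s (\boldsymbol{\pi}^t)^\top$), and lower semicontinuous because $\Pi(\boldsymbol{\pi}^s,\boldsymbol{\pi}^t)$ is a closed set (so its indicator is lsc) and $-\lambda\Vert\cdot\Vert^2$ is continuous. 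Finally, $F$ is bounded below on its effective domain: the polytope is compact, so $f$ and $-\lambda\Vert\X\Vert^2$ are bounded on it, and outside the polytope $g \equiv +\infty$.

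With these in hand, I would invoke the convergence theorem of \cite{li2015accelerated} for nonconvex APG. Their result guarantees that, under the step size requirement $\alpha < 1/s$, the monitor sequence constructed from the iterates is nonincreasing via a sufficient decrease inequality, the successive differences $\Vert \X^{(t+1)} - \X^{(t)} \Vert$ vanish, and every limit point of $\{\X^{(t)}\}$ is a critical point of $F$; since the polytope is compact, such limit points always exist. Combined with the closed-form proximal step derived in the previous proposition (which shows each subproblem reduces to a standard OT linear program and is thus well-defined and solvable), this yields the claimed global convergence to a critical point.

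The main obstacle I anticipate is bookkeeping rather than mathematics: one must be careful that the concave piece $-\lambda\Vert\X\Vert^2$ is absorbed into $g$ (not $f$), so that $f$ retains the clean Lipschitz constant $s$ and the step size condition $\alpha < 1/s$ is the right one, while the nonconvexity is transferred to $g$ where Li and Lin's framework already accommodates it. One should also confirm that their assumption set does not silently require $g$ to be bounded below or coercive on its own (it requires these on $F$), otherwise the $-\lambda\Vert\X\Vert^2$ term would need separate handling; compactness of $\Pi(\boldsymbol{\pi}^s,\boldsymbol{\pi}^t)$ makes this moot.
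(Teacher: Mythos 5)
Your proposal is correct and follows essentially the same route as the paper: the same splitting $f+g$ from equation \ref{eq:decomposition}, verification that $f$ is $s$-smooth, that $g$ is proper and lower semicontinuous, that the full objective is coercive (which you justify via compactness of the transportation polytope), and then an appeal to the convergence theorem of \cite{li2015accelerated}. The paper's proof is simply a more terse statement of exactly these checks, so your additional detail (e.g.\ $s = 2\Vert\mL\Vert_{\mathrm{op}}$, nonemptiness via the product measure) is a faithful elaboration rather than a different argument.
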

\begin{proof} 
Here, we have that $f$ is proper and $s$-smooth i.e. $\nabla f$ is $s$-Lipschitz. $g$ is proper and lower semi-continuous. Additionally, $f+g$ is coercive. Then, according to theorem 1 in \cite{li2015accelerated}, nonconvex accelerated proximal GD globally converges for $\alpha < \frac{1}{s}$.
\end{proof}

\begin{proposition}
For a graph with $n$ nodes, the complexity of an iteration of the proposed algorithm is $\mathcal{O}\left(kn^2\log n\right)$.
\end{proposition}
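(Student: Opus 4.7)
The plan is to decompose one iteration of Algorithm~\ref{alg:cuts} into its atomic operations and bound the cost of each one separately. By construction of the nonconvex accelerated proximal GD scheme, a single iteration consists of: (i) an extrapolation step that linearly combines the two previous iterates, which costs $O(nk)$; (ii) a gradient evaluation of the smooth part $f(\X)=\Tr(\X^\top\mL\X)$, which by the computation in the previous proposition amounts to forming $(\I-2\alpha\mL)\X^{(t)}$; and (iii) the proximal step, which by that same proposition reduces to solving a single OT linear program over $\Pi(\boldsymbol{\pi}^s,\boldsymbol{\pi}^t)$ with $n\times k$ cost matrix $(2\alpha\mL-\I)\X^{(t)}$.

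First, I would bound (ii): the dense matrix-matrix product $\mL\X^{(t)}$ followed by an elementwise combination with $\X^{(t)}$ costs $O(n^2 k)$ (or $O((n+m)k)$ when the graph has $m$ edges, which is dominated by the other terms). Second, I would analyze the OT subproblem (iii) solved via the network simplex algorithm of \cite{bonneel2011displacement} that is used to obtain the EMD solution. The standard worst case complexity of the network simplex on a transportation problem with $n$ sources and $k$ targets is $O\bigl((n+k)\,nk\,\log(n+k)\bigr)$; in the regime $k\le n$ which is the one relevant to clustering, this simplifies to $O(n^2 k\log n)$.

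Adding the three contributions, the OT step dominates the gradient and extrapolation steps, yielding a total per-iteration complexity of $O(kn^2\log n)$, which is the claimed bound.

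The main obstacle is pinning down the worst case complexity of the proximal step, since the network simplex is empirically much faster than its worst case guarantee and different pivoting rules yield different bounds in the literature. I would therefore explicitly invoke the classical analysis of the network simplex for bipartite transportation and cite \cite{bonneel2011displacement} (and, if needed, \cite{peyre2019computational}) to justify the $O((n+k)nk\log(n+k))$ bound; every other step of the iteration is a routine linear algebra operation whose cost is dominated by this term.
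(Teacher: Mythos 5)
Your proposal is correct and follows essentially the same route as the paper's proof: decompose the iteration into the dense matrix product, costing $\mathcal{O}(kn^2)$, and the EMD subproblem solved by the network simplex, whose worst-case bound of $\mathcal{O}\bigl((n+k)\,nk\log(\cdot)\bigr)$ dominates and collapses to $\mathcal{O}(kn^2\log n)$ when $k\ll n$. The only cosmetic differences are that the paper invokes \cite{orlin1997polynomial} with a $\log|\mathcal{E}_{EMD}|=\log(nk)$ factor rather than your $\log(n+k)$ (both are $\mathcal{O}(\log n)$), and you additionally itemize the $\mathcal{O}(nk)$ extrapolation step, which the paper leaves implicit.
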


\begin{proof}
We note that in practice $n>>k$ and that the complexity of the network simplex algorithm for some graph $\mathcal{G}_{EMD}=(\mathcal{V}_{EMD}, \mathcal{V}_{EMD})$ is in $\mathcal{O}(|\mathcal{V}_{EMD}| |\mathcal{E}_{EMD}| \log |\mathcal{E}_{EMD}|)$ \citep{orlin1997polynomial}. In our case, this graph has $|\mathcal{V}_{EMD}|=n+k$ (since $n>>k$, we can drop the $k$) and $|\mathcal{E}_{EMD}|=nk$. The other operation that is performed during each iteration is the matrix multiplication whose complexity is in $\mathcal{O}(k|\mathcal{E}|)$ where $|\mathcal{E}|$ is the number of edges in the original graph. In the worst case when matrix $\mL$ is fully dense, we have that $|\mathcal{E}|=n^2$.
\end{proof}

\begin{algorithm}
% \small
\KwData{$\A$ Adjacency matrix,
       $\boldsymbol{\pi}^s$ node size distribution, 
       $\boldsymbol{\pi}^t$ cluster size distribution,
       $\G_{init}$ initial partition matrix,
       $\alpha=\frac{1}{2\lambda}<\frac{1}{s}$ step size,
       $\textit{maxIter}$ maximum number of iterations.}
\KwResult{$\G$ partition of the graph.}
\vspace{.5em}

Construct Laplacian matrix $\mL$ from the adjacency matrix $\A$\;
$\X^{(0)} \gets \arg\texttt{OT}\left(\G_{init}, \boldsymbol{\pi}^s, \boldsymbol{\pi}^t\right)$\;
$\Z^{(1)}\gets \X^{(0)}, \X^{(1)}\gets \X^{(0)}$\;
$c_0\gets 0, c_1 \gets 1$\;
\vspace{.5em}
\While{ \textrm{maxIter not reached} }{
    $\Y^{(t)} = \X^{(t)} + \frac{c_{t-1}}{c_t} (\Z^{(t)} - \X^{(t)}) + \frac{c_{t-1} - 1}{c_t} (\X^{(t)} - \X^{(t-1)})$\;
    $\Z^{(t+1)}:= \arg\texttt{OT}\left((2\alpha\mL-\I)\Y^{(t)}, \boldsymbol{\pi}^s, \boldsymbol{\pi}^t\right)$\;
    $\V^{(t+1)}:= \arg\texttt{OT}\left((2\alpha\mL-\I)\X^{(t)}, \boldsymbol{\pi}^s, \boldsymbol{\pi}^t\right)$\;
    $c_{t+1}=(\sqrt{4c_t^2+1}+1) / 2$\;
    $\X^{(t+1)}=\begin{cases}
       \ \Z^{(t+1)},\text{ if } \text{loss}\left(\Z^{(t+1)}\right)<\text{loss}\left(\V^{(t+1)}\right)\\
       \ \V^{(t+1)}, \text{ otherwise.}
    \end{cases}$\;
}
\vspace{.5em}
Generate partition matrix $\G$ by assigning each node $i$ to the $(\argmax_i \vx_i)$-th partition.\;
\caption{{Nonconvex Accelerated PGD for \texttt{OT-cut}}}
\label{alg:cuts}
\end{algorithm}

\begin{table*}[t]
\centering
\caption{Dataset Statistics. The balance ratio is the ratio of the most frequent class over the least frequent one.}
% \vspace{.3cm}
% \renewcommand{\arraystretch}{1.15}
\begin{tabular}{llcccccc}
\toprule
\textbf{Type}&\textbf{Dataset} &  \textbf{Nodes} & \textbf{Graph \& Edges} & \textbf{Sparsity} & \textbf{Clusters} &  \textbf{Balance Ratio}\\
\midrule
\multirow{12}{*}{\shortstack{Graphs\\Built\\From\\Images}}&	&&LRSC	(100,000,000)	&	0.0\%	&		&	\\\cmidrule(lr){4-4}
&MNIST &10,000	&LSR	(100,000,000)	&	0.0\%	&	10	&	1.272\\\cmidrule(lr){4-4}
&	&&ENSC	(785,744)	&	99.2\%	&		&\\
\cmidrule{2-7}
&	&&LRSC	(100,000,000)	&	0.0\%	&	&	\\\cmidrule(lr){4-4}
&Fashion-MNIST	&10,000&LSR	(100,000,000)	&	0.0\%	&	10	&	1.0\\\cmidrule(lr){4-4}
&	&&ENSC 	(458,390)	&	99.5\%	&		&	\\
\cmidrule{2-7}
&	&&LSR	(100,000,000)	&	0.0\%	&		&	\\\cmidrule(lr){4-4}
&KMNIST	&10,000&LRSC	(100,000,000)	&	0.0\%	&	10	&	1.0\\\cmidrule(lr){4-4}
&	&&ENSC   (817,124)	&	99.2\%	&		&\\\midrule
\multirow{5}{*}{\shortstack{Naturally\\ Occuring \\ Graphs}}&ACM	   &	3,025	&	2,210,761	&	75.8\%	&	3	&	1.099\\\cmidrule{2-7}
&DBLP	&	4,057	&	6,772,278	&	58.9\%	&	4	&	1.607\\\cmidrule{2-7}
&Village	&	1,991	&	16,800	&	99.6\%	&	12	&	3.792\\\cmidrule{2-7}
&EU-Email	&	1,005	&	32,770	&	96.8\%	&	42	&	109.0\\
\bottomrule
\end{tabular}
\label{tab:datasets}
\end{table*}
\begin{table*}[!h]
\centering
\caption{Average (±sd) clustering performance and running times on the graph built from images. The best performance is highlighted in bold, the lowest (highest) runtime is highlighted in blue (red).}
% \vspace{.3cm}
\begin{tabular}{lllclclc}
\toprule
{} & & \multicolumn{2}{c}{\textbf{MNIST}} & \multicolumn{2}{c}{\textbf{Fashion-MNIST}} & \multicolumn{2}{c}{\textbf{KMNIST}} 
\\\cmidrule(lr){3-4}\cmidrule(lr){5-6}\cmidrule(lr){7-8}
\textbf{Graph} & \textbf{Method}  & \multicolumn{1}{c}{ARI} & \multicolumn{1}{c}{Time} & \multicolumn{1}{c}{ARI} & \multicolumn{1}{c}{Time} & \multicolumn{1}{c}{ARI} & \multicolumn{1}{c}{Time}
\\\midrule
&Spectral & 0.4134 \scriptsize{±0.0003} & 10.28  & 0.1742 \scriptsize{±0.0003} & 10.51 & 0.4067 \scriptsize{±0.0}    & 9.83  \\
&S-GWL      & 0.0488 \scriptsize{±0.0}    & 7.88   & 0.0188 \scriptsize{±0.0}    & 7.84  & 0.0560 \scriptsize{±0.0}    & 7.98  \\
LRSC&SpecGWL  & 0.0248 \scriptsize{±0.0}    & \slowest{453.19} & 0.0111 \scriptsize{±0.0}    & \slowest{397.19} & 0.0145 \scriptsize{±0.0}    & \slowest{383.23}\\
&\textbf{OT-rcut}  & 0.4516 \scriptsize{±0.0273} & \fastest{5.58}   & 0.2231 \scriptsize{±0.0051} & \fastest{5.82} & \textbf{0.4157 } \scriptsize{±0.0154} & 6.15 \\
&\textbf{OT-ncut}  & \textbf{0.4751} \scriptsize{±0.0383} & 6.12  & \textbf{0.2291 } \scriptsize{±0.0148} & 6.11  & 0.3832 \scriptsize{±0.0279} & \fastest{5.88} \\
\midrule
&Spectral & 0.311 \scriptsize{±0.0002}  & 8.82   & 0.1486 \scriptsize{±0.0001} & 10.19 & 0.3631 \scriptsize{±0.0001} & 9.72  \\
&S-GWL      & 0.0628 \scriptsize{±0.0}    & 8.2    & 0.0357 \scriptsize{±0.0}    & 7.93  & 0.0593 \scriptsize{±0.0}    & 8.01  \\
LSR&SpecGWL  & 0.1127 \scriptsize{±0.0}    & \slowest{454.06} & 0.0341 \scriptsize{±0.0}    & \slowest{454.26} & 0.0267 \scriptsize{±0.0}    & \slowest{407.55} \\
&\textbf{OT-rcut}  & \textbf{0.3723 } \scriptsize{±0.0377} & 6.23    & 0.1771 \scriptsize{±0.0164} & 6.61  & \textbf{0.4335 } \scriptsize{±0.0105} & 6.01  \\
&\textbf{OT-ncut}  & 0.3458 \scriptsize{±0.0267} & \fastest{5.77}  & \textbf{0.1915 } \scriptsize{±0.0131} & \fastest{5.72}  & 0.4301 \scriptsize{±0.0075} & \fastest{5.87} \\
\midrule
&Spectral & 0.1206 \scriptsize{±0.0001} & 13.28  & 0.1164 \scriptsize{±0.0}    & 12.62 & \textbf{0.4321 } \scriptsize{±0.0007} & 13.6  \\
% &Louvain & 0.6027\scriptsize{±0.0442} & 14.40
% & 0.4675\scriptsize{±0.0072} & 12.04
% & 0.219±0.019 & \scriptsize{14.24}
% \\
% greedy & 0.1469±0.0 & 333.49 & 0.0625±0.0 & 349.32 & 0.1545±0.0 & 239.32 & &&&&-0.0004±0.0 & 2862.97 & 
&S-GWL      & 0.0798 \scriptsize{±0.0}    & 8.05   & 0.0362 \scriptsize{±0.0}    & 7.85  & 0.0422 \scriptsize{±0.0}    & 7.96  \\
ENSC&SpecGWL  & \textbf{0.5444 } \scriptsize{±0.0}    & \slowest{268.12} & 0.1082 \scriptsize{±0.0}    & \slowest{288.75}  & 0.4020 \scriptsize{±0.0}    & \slowest{287.06}\\ 

&\textbf{OT-rcut}  & 0.4228 \scriptsize{±0.0694} & \fastest{5.47}   & 0.2113 \scriptsize{±0.0257} & 6.18 & 0.2924 \scriptsize{±0.0589} & 6.27 \\
&\textbf{OT-ncut}  & 0.3882 \scriptsize{±0.0718} & 5.68  & \textbf{0.2251 } \scriptsize{±0.0191} & \fastest{5.77}    & 0.2771 \scriptsize{±0.0226} & \fastest{5.83} \\
\bottomrule
\end{tabular}
\label{tab:img-results}
\end{table*}
\begin{table*}[!h]
\centering
\caption{Average (±sd) clustering performance and running times on the graph datasets. Same legend as for Table \ref{tab:img-results}.}
% \vspace{.3cm}
\resizebox{\textwidth}{!}{
\begin{tabular}{@{}llclclclc@{}}
\toprule
{} & \multicolumn{2}{c}{\textbf{EU-Email}} & \multicolumn{2}{c}{\textbf{Village}} & \multicolumn{2}{c}{\textbf{ACM}} & \multicolumn{2}{c}{\textbf{DBLP}} \\
\cmidrule(lr){2-3}\cmidrule(lr){4-5}\cmidrule(lr){6-7}\cmidrule(lr){8-9}
\textbf{Method}  & \multicolumn{1}{c}{ARI} & \multicolumn{1}{c}{Time} & \multicolumn{1}{c}{ARI} & \multicolumn{1}{c}{Time} & \multicolumn{1}{c}{ARI} & \multicolumn{1}{c}{Time} & \multicolumn{1}{c}{ARI} & \multicolumn{1}{c}{Time}  
\\\midrule
Spectral&0.2445 \scriptsize{±0.0133} & 1.13 & 0.3892 \scriptsize{±0.1934} & 0.76 & 0.1599 \scriptsize{±0.003}  & 1.83 & 0.0039 \scriptsize{±0.0053} & 16.49 \\
Greedy & 0.1711 \scriptsize{±0.0} & 2.28 & 0.0002 \scriptsize{±0.0} & 1.15 & 0.0 \scriptsize{±0.0} & 37.59 & 0.1375 \scriptsize{±0.0} &
144.21 \\ 
Infomap & \textbf{0.3087} \scriptsize{±0.0} & 0.09 & \multicolumn{2}{c}{\textemdash}  & \multicolumn{2}{c}{\textemdash} & -0.0001 \scriptsize{±0.0} & 30.71 \\
% Louvain & 0.2458±0.0084 & 43.24 \\
S-GWL&0.2684 \scriptsize{±0.0}    & \slowest{2.11} & 0.5333 \scriptsize{±0.0}    & \slowest{4.26} & 0.1873 \scriptsize{±0.0}    & 2.09 & 0.0 \scriptsize{±0.0}       & \slowest{18.42} \\
SpecGWL&0.1125 \scriptsize{±0.0}    & 0.59 & 0.5887 \scriptsize{±0.0}    & 0.89 & 0.008 \scriptsize{±0.0}     & \slowest{4.65} & 0.2891 \scriptsize{±0.0}    & 13.66 \\
\textbf{OT-rcut}&0.2629 \scriptsize{±0.0096} & 0.22 & \textbf{0.5969 } \scriptsize{±0.0505} & \fastest{0.27} & \textbf{0.2643 } \scriptsize{±0.0249} & \fastest{0.69} & \textbf{0.3119 } \scriptsize{±0.0279} & 1.45 \\
\textbf{OT-ncut}&0.2687 \scriptsize{±0.0094} & \fastest{0.20} & 0.4819 \scriptsize{±0.0369} & 0.30 & 0.2167 \scriptsize{±0.045}  & 0.77 & 0.1721 \scriptsize{±0.0674}  & \fastest{1.33}
\\\bottomrule
\end{tabular}
}
\label{tab:results}
\end{table*}

\begin{figure*}[t]
\begin{subfigure}[b]{0.24\textwidth}
    \includegraphics[width=\textwidth]{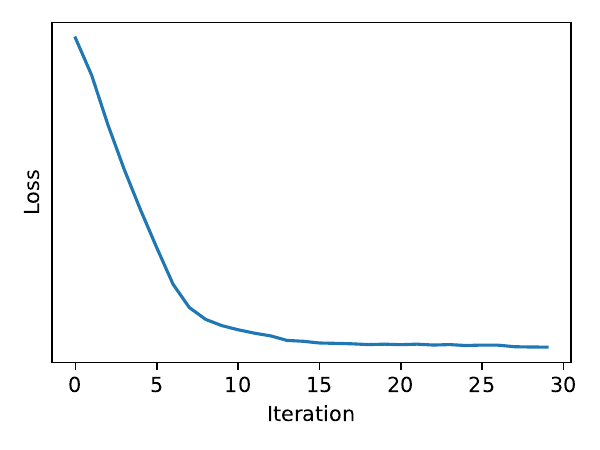}
    \caption{OT-ncut on Village}
\end{subfigure}
\hfill
\begin{subfigure}[b]{.24\textwidth}
    \includegraphics[width=\textwidth]{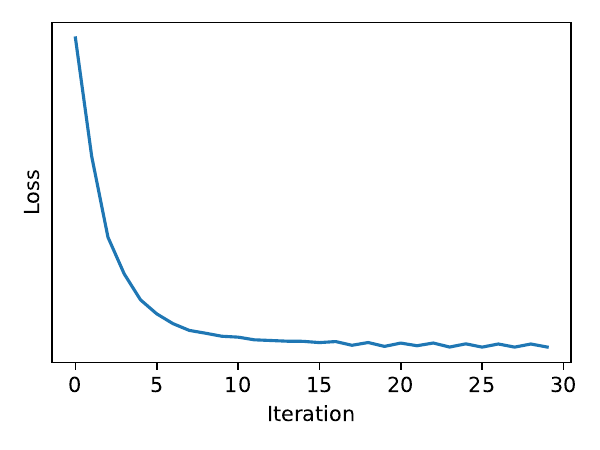}
    \caption{OT-ncut on EU-Email}
\end{subfigure}
\hfill
\begin{subfigure}[b]{.24\textwidth}
    \includegraphics[width=\textwidth]{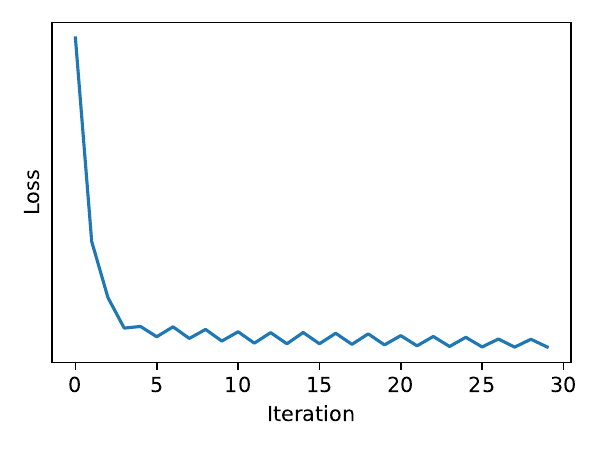}
    \caption{OT-ncut on DBLP}
\end{subfigure}
\hfill
\begin{subfigure}[b]{.24\textwidth}
    \includegraphics[width=\textwidth]{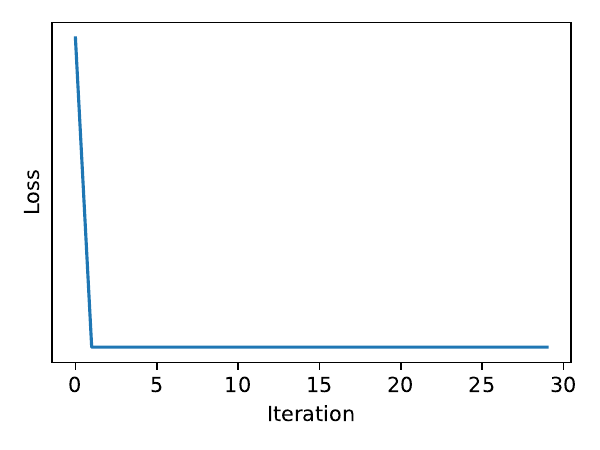}
    \caption{OT-ncut on ACM}
\end{subfigure}
\begin{subfigure}[b]{0.24\textwidth}
    \includegraphics[width=\textwidth]{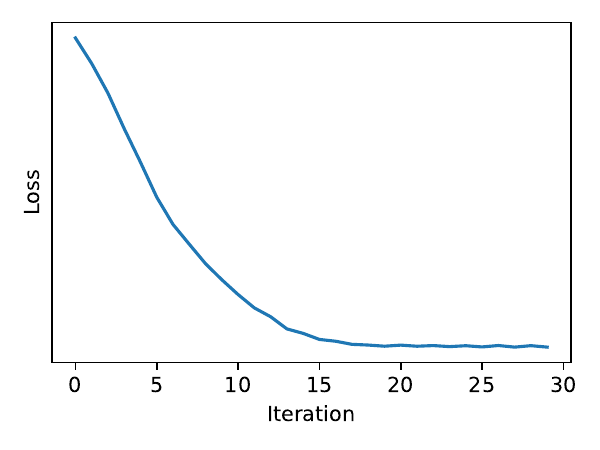}
    \caption{OT-rcut on Village}
\end{subfigure}
\hfill
\begin{subfigure}[b]{.24\textwidth}
    \includegraphics[width=\textwidth]{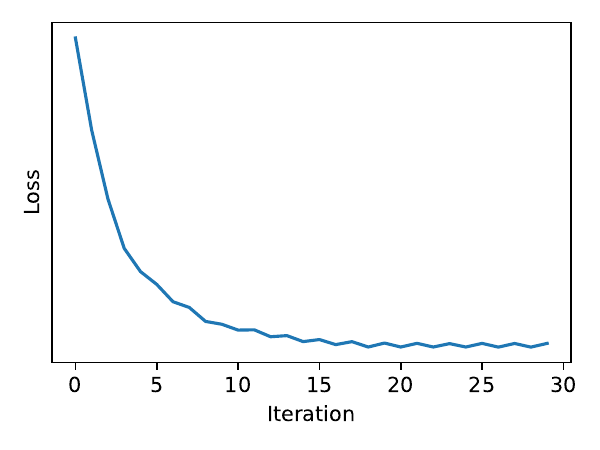}
    \caption{OT-rcut on EU-Email}
\end{subfigure}
\hfill
\begin{subfigure}[b]{.24\textwidth}
    \includegraphics[width=\textwidth]{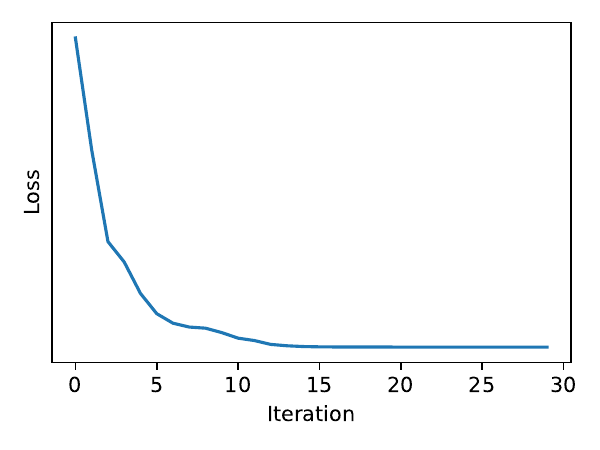}
    \caption{OT-rcut on DBLP}
\end{subfigure}
\hfill
\begin{subfigure}[b]{.24\textwidth}
    \includegraphics[width=\textwidth]{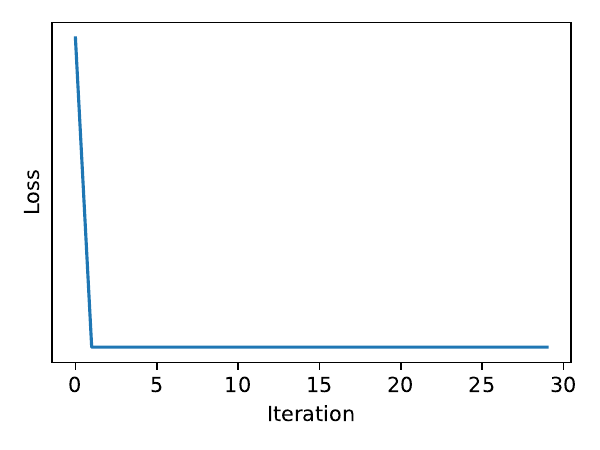}
    \caption{OT-rcut on ACM}
\end{subfigure}
\caption{Evolution of the objective as function of the number of iterations.}
\label{fig:loss}
\end{figure*}

\section{Experiments}

We evaluated the clustering performance of our two variants \texttt{OT-ncut} and \texttt{OT-rcut} algorithms against the spectral clustering algorithm and state-of-the-art OT-based graph clustering approaches.

\begin{table}
\centering
\caption{Faithfulness to the constraints: KL divergence between the desired and the resulting cluster distributions. A value of zero reflects a perfect match between the constraint and the result.}
% \vspace{.3cm}
\begin{tabular}{@{}lccc@{}}
\toprule
Dataset & Graph & OT-rcut & OT-ncut\\\midrule
 &LRSC		&0.0 \scriptsize{±0.0}& 0.0001 \scriptsize{±0.0001}\\\cmidrule(lr){2-2}
MNIST &LSR 	&0.0 \scriptsize{±0.0}& 0.0001 \scriptsize{±0.0001}\\\cmidrule(lr){2-2}
 &ENSC	&0.0 \scriptsize{±0.0}& 0.0 \scriptsize{±0.0}\\\midrule
 &LRSC	&0.0 \scriptsize{±0.0}& 0.0001 \scriptsize{±0.0001}\\\cmidrule(lr){2-2}
Fashion-MNIST &LSR 	&0.0 \scriptsize{±0.0}& 0.0001 \scriptsize{±0.0001}\\\cmidrule(lr){2-2}
 &ENSC &0.0 \scriptsize{±0.0}& 0.0 \scriptsize{±0.0}\\
\bottomrule
\end{tabular}
\hspace{1cm}
\begin{tabular}{@{}lccc@{}}
\toprule
Dataset & Graph & OT-rcut & OT-ncut\\\midrule
 &LRSC	&0.0 \scriptsize{±0.0}& 0.0 \scriptsize{±0.0}\\\cmidrule(lr){2-2}
KMNIST &LSR 	&0.0 \scriptsize{±0.0}& 0.0001 \scriptsize{±0.0001}\\\cmidrule(lr){2-2}	
 &ENSC	&0.0 \scriptsize{±0.0}& 0.0 \scriptsize{±0.0}\\\midrule
ACM& 			&0.0 \scriptsize{±0.0}& 0.0 \scriptsize{±0.0}\\\midrule
DBLP& 			&0.0 \scriptsize{±0.0}& 0.00.0 \scriptsize{±0.0021}\\\midrule
Village& 		&0.0 \scriptsize{±0.0}&0.0011 \scriptsize{±0.0027} \\\midrule
EU-Email& 		&0.0 \scriptsize{±0.0}& 0.0004 \scriptsize{±0.0007}\\
\bottomrule
\end{tabular}
\label{tab:concordance}
\end{table}

% \end{wrapfigure}

\subsection{Datasets}
We perform experiments on graphs constructed from image datasets, namely, MNIST \citep{deng2012mnist}, Fashion-MNIST \citep{xiao2017fashion} and KMNIST \citep{clanuwat2018deep}. We generate these graphs using three subspace clustering approaches: low-rank subspace clustering (LRSC) \citep{vidal2014low}, least-square regression subspace clustering (LSR) \citep{lu2012robust} and elastic net subspace clustering (ENSC) \citep{you2016oracle}. 
We also consider four graph datasets: DBLP, a co-term citation network; and ACM, a co-author citation networks \citep{fan2020one2multi}. EU-Email an email network from a large European research institution \citep{leskovec2014snap}. Indian-Village describes interactions among villagers in Indian villages \citep{banerjee2013diffusion}.
The statistical summaries of these datasets are available in Table \ref{tab:datasets}.

\begin{figure}[t]
    \begin{subfigure}{.49\textwidth}
    \centering
    \includegraphics[width=.9\columnwidth, trim={1cm 1cm 1cm 0},clip]{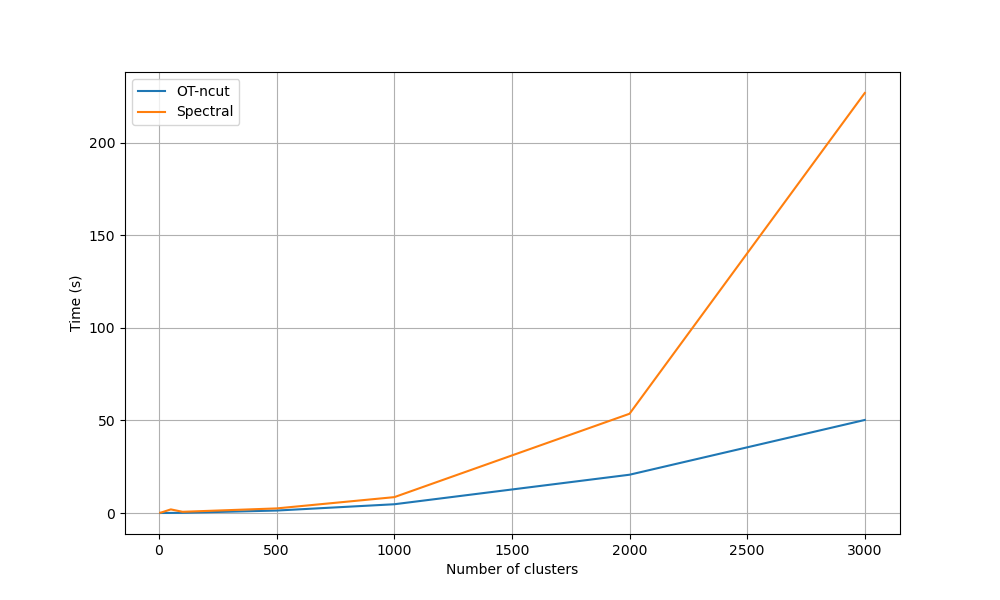}
    \caption{Number of nodes.}
    \end{subfigure}
    \hfill
    \begin{subfigure}{.49\textwidth}
    \centering
    \includegraphics[width=.9\columnwidth, trim={1cm 1cm 1cm 0},clip]{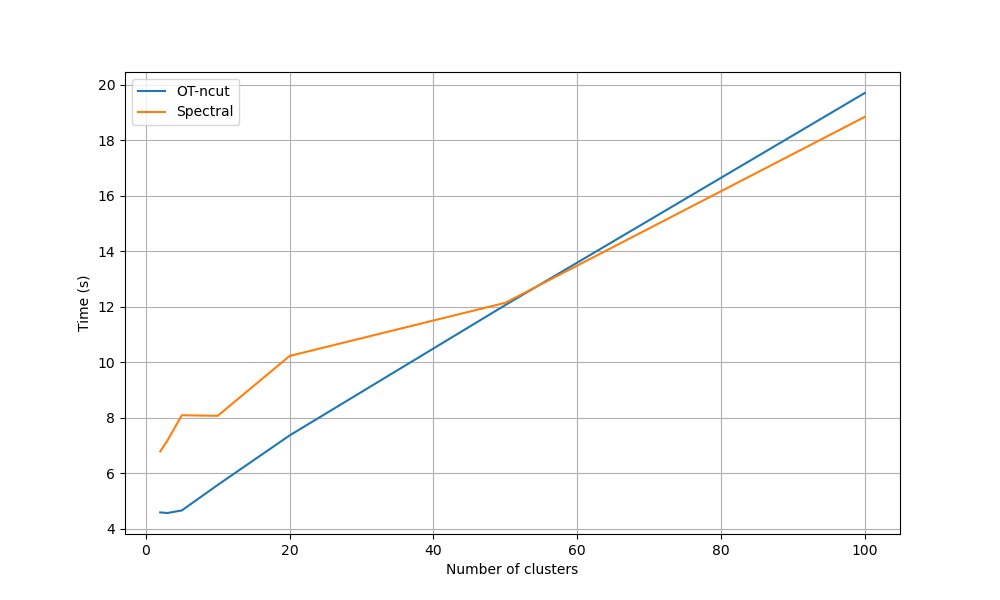}
    \caption{Number of clusters.}
    \end{subfigure}
    \caption{Running times of Spectral clustering and OT-ncut on subsets of MNIST as a function of the number of nodes and the number of clusters.}
    \label{fig:subsets}
\end{figure}

\begin{figure}[t]
    \begin{subfigure}{.47\textwidth}
    \centering
    \includegraphics[width=.65\columnwidth,trim={.4cm 0 .4cm 0},clip]{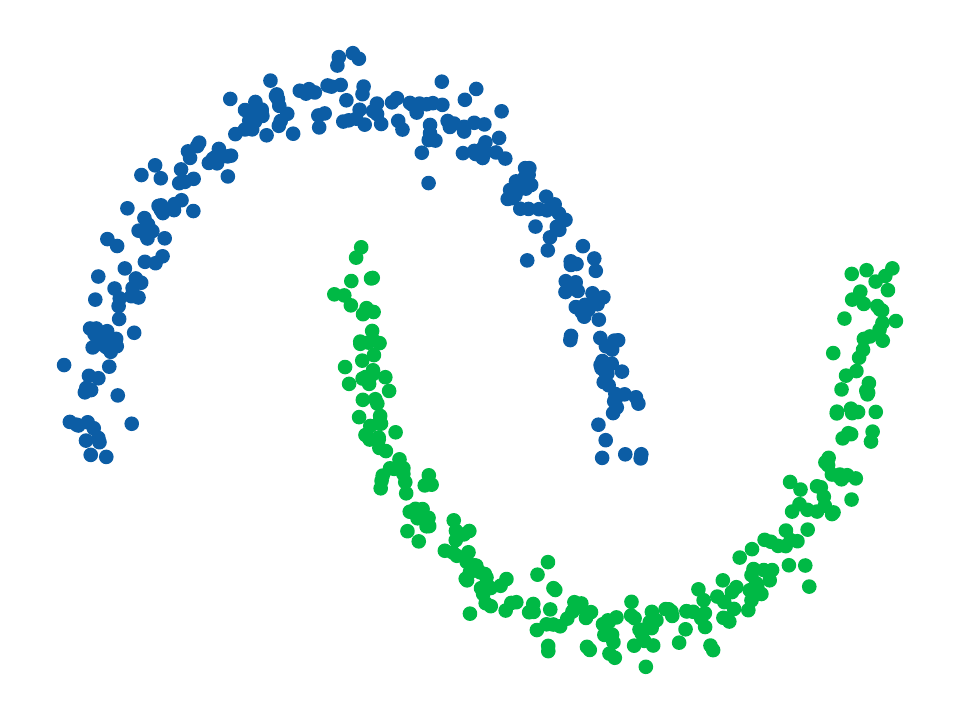}
    \caption{OT-ncut on a a graph built using a k-nn graph.}
    \label{fig:toy1}
    \end{subfigure}
    \hfill
    \begin{subfigure}{.47\textwidth}
    \centering
    \includegraphics[width=.65\columnwidth,trim={.4cm 0 .4cm 0},clip]{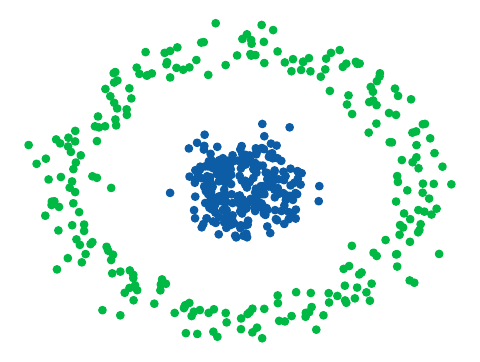}
    \caption{OT-rcut on a graph built using the RBF kernel.}
    \label{fig:toy2}
    \end{subfigure}
    \captionof{figure}{OT-ncut and OT-rcut results on toy datasets.}
\end{figure}

\begin{figure}[t]
    \begin{minipage}{.5\textwidth}
        \centering
        \includegraphics[width=\columnwidth]{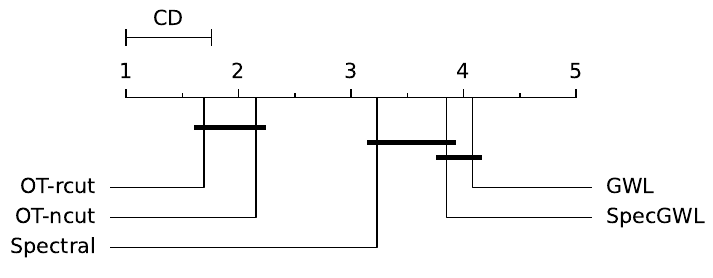}
        \caption{Neményi post-hoc rank test results. OT-rcut and OT-ncut outperform the baselines in a statistically significant manner.}
        \label{fig:nemenyi}
    \end{minipage}%
    \hfill
    \begin{minipage}{0.45\textwidth}
        \centering
        \captionof{table}{Image clustering performance in terms of ARI on the long-tailed CIFAR-10 datasets. Values are the averages over five runs.}
        \label{tab:imbalanced}
        \begin{tabular}{@{}lcccc@{}}
            \toprule
            Balance     & $5$      & $10$      & $50$      & $100$ \\\midrule
            Spectral & 0.0566          & 0.0622          & 0.0584          & 0.0682     \\
            OT-ncut & \textbf{0.0831} & 0.0730          & \textbf{0.0794} & \textbf{0.0693}    \\
            OT-rcut & 0.0731          & \textbf{0.0779} & 0.0752          & 0.0574     \\\bottomrule 
            \end{tabular}
    \end{minipage}
\end{figure}

\subsection{Performance Metrics} 
We adopt Adjusted Rand Index (ARI) \citep{hubert1985comparing} to evaluate clustering performance. It takes values between 1 and -0.5; larger values signify better performance. 
To evaluate the concordance of the desired and the obtained cluster distributions, we use the Kullback-Leibler (KL) divergence \citep{kullback1951information}. The KL divergence between two perfectly matching distributions will be equal to zero. Otherwise, it would be greater than zero. Smaller KL values signify better concordance.

\subsection{Experimental settings}
Our two variants, \texttt{OT-ncut} and \texttt{OT-rcut} are implemented via the Python optimal transport package (POT) \citep{flamary2021pot}. We use random initialization and use uniform target distributions unless explicitly stated otherwise. We also set $\alpha=1/2$ and the number of iterations to 30 for the image graphs and 20 for the other graphs. We also use normalized laplacian matrices. For the baselines, we use the Scikit-Learn \citep{pedregosa2011scikit} implementation of spectral clustering. We use the official implementations of S-GWL \citep{xu2019scalable} and SpecGWL \citep{chowdhury2021generalized}. Furthermore, we considered the baselines used in \cite{chowdhury2021generalized}, namely, Fluid \citep{pares2018fluid}, Louvain \citep{blondel2008fast}, Infomap \citep{rosvall2009map} and Greedy \citep{clauset2004finding}:
\begin{enumerate}
    \item We reported results on the naturally occuring graphs only due to excessive run times over the image graphs.
    \item Louvain and Infomap do not allow to specify the number of clusters. Comparison between partitions with a different number of clusters using ARI is not meaningful. As such, we only reported results on datasets for which those algorithms manage to recover the ground truth-number of clusters (for all runs). Louvain was dropped since it never managed to find the ground-truth number of clusters.
    \item Fluid was dropped because it requires graphs that have a single connected component. This is not the case for any of the naturally occurring graphs.
    \item We use the implementations of Louvain, Fluid, and Greedy provided in the networkx package \citep{hagberg2008exploring}. We use the implementation of Infomap provided in \cite{mapequation2024software}.
\end{enumerate}
All experiments were run five times and were performed on a 64gb RAM machine with a 12th Gen Intel(R) Core(TM) i9-12950HX (24 CPUs) processor with a frequency of 2.3GHz.

\subsection{Results}
% We emphasize the fact that the most important quality metric is the relative difference instead of the absolute value of the metrics as {our objective is not to learn a better graph} over the dataset but rather to get a {better cut over the chosen graph}. 
\paragraph{Toy Datasets.} Our algorithm deals with a graph cut-like criterion which means that it should partition a dataset according to its connectivity. This means that it should work on datasets on which metric clustering approaches such as k-means fail. Two toy examples are given in Figure \ref{fig:toy1} and Figure \ref{fig:toy2}.

\paragraph{Clustering Performance.}
Table \ref{tab:results} presents the clustering performance on the graph datasets. In all cases, one of our two variants has the best results in terms of ARI except on EU-Email where Infomap has the best performance. Table \ref{tab:img-results} describes results obtained on image graph datasets. One of our two variants gives the best results on all three datasets with the graphs generated by LRSC and LSR. On the graphs generated by ENSC, the best result is obtained only on Fashion-MNIST while SpecGWL has the best results on MNIST. Spectral clustering gives the best performance on KMNIST. 
Note that better results can also be obtained with our variants by trading-off some computational efficiency. Specifically, this can be done by using several different initializations and taking the one that leads to minimizing the objective the most.

\paragraph{Imbalanced Datasets.} Results on long-tailed versions of CIFAR-10  are reported in table \ref{tab:imbalanced}. We notice that using ground truth cluster distribution constraints leads to better results when comparing to the traditional spectral clustering algorithm.

\paragraph{Statistical Significance Testing} Figure \ref{fig:nemenyi} shows the performance ranks of the different methods averaged over all the runs on the datasets we considered in terms of ARI. The Neményi post-hoc rank test \citep{nemenyi1963distribution} shows that OT-rcut and OT-ncut perform similarly and outperform the other approaches for a confidence level of 95\%. Other approaches perform similarly.

\paragraph{Concordance of the Desired \& Resulting Cluster Sizes.}
To evaluate our algorithm's ability to produce a partition with the desired group size distribution, we use the KL divergence metric. Specifically, we compare the distribution obtained by our OT-rcut and OT-ncut variants against the target distribution specified as a hyperparameter ($\boldsymbol{\pi}^t$). Table \ref{tab:concordance} presents the KL divergences for both variants on various datasets. Predictably, our approaches achieve near-perfect performance on most datasets. Notably, OT-rcut is always able to perfectly recover the desired group sizes.
This has to do with the fact that, up to a constant, all the entries in the solutions to the \texttt{rcut} problem are integers. This is not necessarily the case for \texttt{ncut} but the KL divergence is still very small due to the sparsity of the solutions.

\paragraph{Running Times.} As shown in Table \ref{tab:results} and Table \ref{tab:img-results}, OT-ncut and OT-rcut are the fastest in terms of execution times compared to other approaches on all datasets. As the graphs got larger, SpecGWL consistently had the largest runtimes. We also report the running times of spectral clustering and our OT-ncut approach on subsets of increasing size of MNIST as well as for increasing numbers of clusters in fig \ref{fig:subsets}. The efficiency of our approach becomes increasingly significant compared to spectral clustering as the number of nodes grows. However, spectral clustering matches our approach's efficiency as the number of clusters increases. Our approach can be made more efficient by adopting sparse representations of the optimal transport plans when doing matrix multiplication.

% \begin{minipage}[c]{0.5\textwidth}
% \vfill
% \begin{figure}[H]
%     \centering
%     \includegraphics[width=.9\columnwidth]{pdfresizer.com-pdf-crop.pdf}
%     \caption{Results of the Nemenyi post-hoc test.}
%     \label{fig:nem}
% \end{figure}
% \vfill
% \end{minipage}

\section{Conclusion}

In this paper we proposed a new graph cut algorithm for partitioning with arbitrary size constraints through optimal transport. This approach generalizes the concept of the normalized and ratio cut to arbitrary size distributions to any notion of size. We derived an algorithm that results in sparse solutions and guarantees global convergence to a critical point.
Experiments on balanced and imbalanced datasets showed the superiority of our approach both in terms of clustering performance and empirical execution times compared to spectral clustering and other OT-based graph clustering approaches. They also demonstrated our approach's ability to recover partitions that match the desired ones which is valuable for practical problems where we wish to obtain balanced or constrained partitions. 

\section{Limitations}
The node and cluster size distribution parameters can either be set using prior domain knowledge or through tuning them by trying different possible values and then selecting the best one via internal clustering quality metrics such as Davies-Bouldin index \citep{davies1979cluster}. In cases where no domain knowledge exits and parameter tuning is impossible, we can weigh each node by its degree and give clusters uniform sizes. This option is similar to what is done normalized cuts where no prior knowledge on size distributions is explicitly available \cite{shi2000normalized}. This issue will be studied in future works.

% \balance
\bibliographystyle{tmlr}
\bibliography{references}

\newpage
\appendix
\section{List of symbols}
A non-exhaustive list of the symbols used throughout the paper is available in table \ref{tab:symbol}.

\begin{table}[h]
    \centering
    \caption{List of symbols.}
    \label{tab:symbol}
    \begin{tabular}{l|l}
        \toprule
         Symbol & Description \\\midrule
         $\mathcal{G}=(\mathcal{V}, \mathcal{E})$ & A graph with a set of vertices $\mathcal{V}$ and edges $\mathcal{E}$. \\
         $\mathcal{A}_1,\ldots,\mathcal{A}_k$ & A partition of the nodes of graph $\mathcal{G}$ \\
         $\bar{\mathcal{A}}$ & The complementary set of $\mathcal{A}$ \\ 
         $\A$ & Adjacency matrix of $\G$ \\
         $\D$ & Diagonal matrix of degrees of $\mathcal{G}$ \\
         $\mL$ & Laplacian matrix of $\mathcal{G}$ \\
         $\X$ & A partition matrix or a transport plan depending on context \\
         $\textrm{vol}(.)$ & Volume of a set of nodes \\
         % \midrule
         $| . |$ & Cardinality of a set \\
         $\texttt{Tr}$ & Trace operator\\ 
         $<.,. >$ & Frobenius product\\ 
         $\Vert . \Vert$ & Frobenius norm \\
         $\Vert . \Vert_0$ & Zero norm \\
         $\otimes$ & Tensor-matrix product \\
         % \midrule
         $\Pi$ & A transportation polytope\\ 
         $\M, \bar{\M}$ & Similarity matrices \\
         $L(\M, \bar{\M})$ & The tensor of all pairwise divergences between the elements of $\M$ and $\bar{\M}$ \\ 
         $\G$ &  A partition matrix \\
         $\Y, \Z, \Y$ & Transport plans \\
         $\pi^s, \pi^t$ & Probability distributions \\
         $I_\mathcal{C}$ & The characteristic function of  $\mathcal{C}$\\
         $\vone$ & Vector of ones \\
         $c_t, s, \alpha, \lambda$ & Scalars\\
         \bottomrule
    \end{tabular}
\end{table}

\end{document}